\let\today\relax
\def\ps@pprintTitle{%
    \let\@oddhead\@empty
    \let\@evenhead\@empty
    \def\@oddfoot{\footnotesize\itshape
         {Submitted preprint} \hfill\today}%
    \let\@evenfoot\@oddfoot
    }
\newtheorem{Theorem}{Theorem}
\newtheorem{remark}{Remark}
\begin{document}

\begin{frontmatter}

%% Title, authors and addresses

%% use the tnoteref command within \title for footnotes;
%% use the tnotetext command for theassociated footnote;
%% use the fnref command within \author or \affiliation for footnotes;
%% use the fntext command for theassociated footnote;
%% use the corref command within \author for corresponding author footnotes;
%% use the cortext command for theassociated footnote;
%% use the ead command for the email address,
%% and the form \ead[url] for the home page:
%% \title{Title\tnoteref{label1}}
%% \tnotetext[label1]{}
%% \author{Name\corref{cor1}\fnref{label2}}
%% \ead{email address}
%% \ead[url]{home page}
%% \fntext[label2]{}
%% \cortext[cor1]{}
%% \affiliation{organization={},
%%            addressline={}, 
%%            city={},
%%            postcode={}, 
%%            state={},
%%            country={}}
%% \fntext[label3]{}

\title{Physics-Informed Neural Networks with Hard Linear Equality Constraints}

%% use optional labels to link authors explicitly to addresses:
%% \author[label1,label2]{}
%% \affiliation[label1]{organization={},
%%             addressline={},
%%             city={},
%%             postcode={},
%%             state={},
%%             country={}}
%%
%% \affiliation[label2]{organization={},
%%             addressline={},
%%             city={},
%%             postcode={},
%%             state={},
%%             country={}}

\author[purduecheme]{Hao Chen}
% \ead{chen4433@purdue.edu}

\author[purduecheme]{Gonzalo E. Constante Flores}
% \ead{geconsta@purdue.edu}

\author[purduecheme]{Can Li\corref{cor}}
\ead{canli@purdue.edu}
\cortext[cor]{Corresponding author at: Davidson School of Chemical Engineering, Purdue University, USA.}

\affiliation[purduecheme]{organization={Davidson School of Chemical Engineering, Purdue University},%Department and Organization
            addressline={480 W. Stadium Ave}, 
            city={West Lafayette},
            postcode={47907}, 
            state={IN},
            country={USA}}
            
\begin{abstract}
%% Text of abstract
Surrogate modeling is used to replace computationally expensive simulations. Neural networks have been widely applied as surrogate models that enable efficient evaluations over complex physical systems. Despite this, neural networks are data-driven models and devoid of any physics. The incorporation of physics into neural networks can improve generalization and data efficiency. The physics-informed neural network (PINN) is an approach to leverage known physical constraints present in the data, but it cannot strictly satisfy them in the predictions. This work proposes a novel physics-informed neural network, KKT-hPINN, which rigorously guarantees hard linear equality constraints through projection layers derived from KKT conditions. Numerical experiments on Aspen models of a continuous stirred-tank reactor (CSTR) unit, an extractive distillation subsystem, and a chemical plant demonstrate that this model can further enhance the prediction accuracy.
\end{abstract}

%%Graphical abstract
% \begin{graphicalabstract}
% %\includegraphics{grabs}
% \end{graphicalabstract}

%%Research highlights
% \begin{highlights}
% \item PINN does not guarantee exact satisfaction of physical constraints
% \item A proposed architecture strictly guarantees linear equality constraints
% \item Hard constraints further improve surrogate model performance
% \end{highlights}

\begin{keyword}
%% keywords here, in the form: keyword \sep keyword
Surrogate modeling \sep Physics-informed neural network \sep Artificial intelligence
%% PACS codes here, in the form: \PACS code \sep code

%% MSC codes here, in the form: \MSC code \sep code
%% or \MSC[2008] code \sep code (2000 is the default)

\end{keyword}

\end{frontmatter}

% \linenumbers

%% main text

\section{Introduction}
\label{sec:introduction}

With the development of computational capability and capacity, computer simulations have attained widespread use across various fields, facilitating high-fidelity modeling and analysis of complex systems or processes. These simulations are governed by a large number of algebraic equations, ordinary or partial differential equations (ODEs or PDEs), or a combination of both differential and algebraic equations (DAEs). These equations are derived from fundamental principles and mechanistic laws, such as the physical laws in thermodynamics and transport phenomena. High-fidelity models with these equations can serve as digital representations of the physical systems in the real world. However, the physically accurate representation is accompanied by a heightened mathematical complexity that elevates the computational expense of simulation. This impedes the use of high-fidelity physical models especially in applications where it is essential to simulate a system repeatedly in a timely manner. 

To efficiently generate simulation outputs, data-driven approaches have sought to substitute a high-fidelity physical model with a surrogate model
\citep{Misener2023, Bhosekar2018, Bradley2022, Williams2021},
A surrogate model stands for a reduced-order model that aims for a computationally efficient approximation at the cost of a certain level of accuracy. This approach provides a more practical means of inferring a system's responses under a great variety of conditions. Among them, feed-forward neural networks have been proven to be capable of approximating any continuous function theoretically. As a universal approximator, it has demonstrated effectiveness in capturing complex and nonlinear relationships in high-dimensional space, with remarkable advancements in computer vision and natural language processing \citep{LeCun2015}. 

Hence, neural networks have received substantial attention as surrogate models, which facilitate their deployment in time-sensitive and large-scale applications \citep{Bradley2022, Dias2020, Kim2020, Mohammadi2022}. For instance, incorporating simulators or underlying equations into optimization can become exceedingly expensive, whereas neural networks offer a more efficient alternative. On the one hand, black-box optimization that enables the use of simulators without an explicit algebraic form is not practical when dealing with a high degree of freedom. Even with access to high-fidelity physical models, optimization remains challenging despite advances in state-of-the-art solvers. The algebraic equations often give rise to nonconvex mixed-integer nonlinear programming (MINLP), and the DAEs introduce optimization problems with ODE or PDE constraints, which can be intractable. On the other hand, neural networks with ReLU activation layers can be integrated as surrogate models within mixed-integer linear programming (MILP) problems to replace the nonlinear functions \citep{Tsay2021}.

On the flip side, the drawback of neural networks is the lack of physical interpretability. This limitation originates from two factors. First, the intricate architecture of neural networks is composed of neurons hierarchically arranged in multiple layers. It is challenging for humans to understand the role of each neuron due to the complexity of hierarchical representations. Second, as a data-driven method, neural networks are black-box models whose parameters are not physics-informed. Training a neural network is equivalent to minimizing an objective function, often referred to as a loss function, over these parameters. While this process updates parameters by evaluating the error between model prediction and ground truth, it does not take into account any physics. Therefore, the model prediction cannot conform to any physical laws. The lack of physical interpretability can hinder their application in high-stakes decision-making problems, where undetected errors or misunderstandings within the model could potentially lead to catastrophic losses.

To mitigate this drawback, the physics-informed neural network (PINN) was developed to leverage the physical constraints that data are subject to in particular application domains. In the prevalent PINN method, a penalty term is incorporated into the loss function to account for the violation of constraints \citep{Karniadakis2021}. 
% The model updates its parameters in a manner that model predictions become closer to the ground truth while keeping constraint violations within acceptable limits. In other words, adding constraints to the loss function implies that the training process amounts to minimizing the prediction errors and violation of physics. 
This can be considered a multi-objective optimization problem, where the goal is to minimize the prediction errors and violation of physics simultaneously at the cost of each other. However, even with the sacrifice of some prediction accuracy, the PINN method still cannot guarantee the exact satisfaction of physical constraints. In other words, PINN only strikes a balance between approximating the ground truth and favoring the first principles during the training process. Hence, this is also named ``soft constraint'', as they are not strictly satisfied.

The inclusion of soft constraints in PINN still cannot avoid physically inconsistent results, rendering it unsuitable for certain applications. For instance, PINN cannot strictly enforce conservation laws when trained to represent a chemical unit. In practical scenarios, it is necessary to partition a large system into distinct units, each of which is then replaced with surrogate models \citep{Henao2011}. While this strategy mitigates the error resulting from the high dimensionality of a large system, it also introduces discrepancies among these units. Although the impact of violating constraints might be negligible in an isolated unit, the errors of intermediate variables can propagate throughout the entire system and be magnified over time \citep{Ma2022a}. Therefore, the violation of mass balance becomes unacceptable when multiple surrogate models are interconnected. 

This calls for the development of PINN architectures that embed hard constraints (hPINN) rather than soft constraints. A few studies claiming to enforce hard constraints fail to provide rigorous mathematical guarantees \citep{Dener2020, Lu2021}. 
The realization of ``hard constraints'' in these architectures heavily relies on trial-and-error tuning of hyperparameters. Hyperparameters that are set without adequate experience can easily lead to the failure of hard constraints and even poor predictions. In this work, we develop a novel PINN architecture with two non-trainable layers. The two layers equivalently represent an orthogonal projection of model predictions onto a feasible region of predefined linear equality constraints. This projection can be formulated as a quadratic program (QP) and analytically solved by the KKT conditions within the architecture. We hence refer to it as KKT-hPINN, as it is grounded in an analytical solution that always satisfies hard linear equality constraints in both training and testing processes. It does not require additional hyperparameters and does not increase computational cost. Results from three case studies show that KKT-hPINN can outperform neural networks and PINN when serving as unit-level, subsystem-level, and plant-level surrogate models in process system engineering. 
%First, a small example of a continuous-stirred tank reactor (CSTR) is presented as a tutorial for constructing a KKT-hPINN surrogate model. Second, we consider a chemical plant that manufactures dimethyl ether (DME) and diethyl ether (DEE). Despite the intricacies in the overall system, we showcase the superior performance and demonstrate the simplicity of developing relevant linear constraints for KKT-hPINN. We then proceed to the case study of an extractive distillation unit, emphasizing the impact of projecting predictions onto feasible regions.

The remainder of the paper is organized as follows. In Section \ref{sec:Literature review}, we begin with a literature review of data-driven methods used for surrogate models, as well as their pros and cons. Section \ref{sec:Methodology} describes the proposed neural network architecture, KKT-hPINN, that strictly imposes hard linear equality constraints. We present the case studies in Section \ref{sec:Case Studies} and demonstrate its superiority over unconstrained neural networks and soft-constrained PINN. It is observed that KKT-hPINN not only imposes hard linear equality constraints in practice but also improves the predictive capability. In the end, Section \ref{sec:Conclusions} concludes the essential characteristics of the KKT-hPINN. 

\section{Literature review}
\label{sec:Literature review}
Numerous studies have been conducted to formulate data-driven approaches for modeling input-output relationships, with some being applied in chemical engineering. This section seeks to summarize the surrogate modeling techniques that have been extensively explored in this field, as well as some related work about PINN. 

Automated Learning of Algebraic Models (ALAMO) is a computational methodology to learn algebraic models by selecting the best subset of regressors \citep{Wilson2017, Cozad2014}. By minimizing metrics that are common in information theory and statistical theory, ALAMO learns algebraic functions that take into account both subset fitness and complexity. The outcome is in an enclosed functional form, which is expressed as a linear combination of basis functions. This approach yields a more tractable model and requires less data. ALAMO is distinguished as one of the surrogate modeling techniques developed by chemical engineers, featuring its success in reaction kinetics parameter estimation \citep{Wilson2019, Na2021} and process optimization \citep{Ma2022a, Ma2022b}. 

Gaussian process regression (GP) is a non-parametric and probabilistic approach that does not estimate parameters for a particular function. Instead, GP directly characterizes the distribution of the mapping function itself, providing the mean and covariance of model predictions. The function is regarded as a Gaussian process, wherein any finite set of random variables conforms to a joint Gaussian distribution. GP capitalizes on its capacity to infer the conditional predictive distribution using simulation data, circumventing a training process \citep{Rasmussen2005}. Incorporation of GP into optimization has been widely explored \citep{Wiebe2022, Schweidtmann2021, Quirante2015}. It has shown promising results in a broad range of applications including model predictive control (MPC) \citep{Paulson2021, Bonzanini2021}, erosion prediction \citep{Dai2022}, experimental design \citep{Olofsson2018}, biotechnology \citep{Mehrian2018}, pharmaceutical manufacturing \citep{Boukouvala2013}, and feasibility analysis \citep{Boukouvala2012}. Introducing physics-based knowledge into GP has also been investigated and showed improvements in prediction performance, but this still depends on penalization and soft constraints \citep{Kim2023}. A recent study also proposed a linearly constrained GP such that any samples and prediction can automatically fulfill known linear operator constraints \citep{Jidling2017}. 

Neural network is another popular approach characterized by its high flexibility and adaptability. The structure and mechanisms of NN will be explained in detail in Section \ref{sec:Methodology}. Neural networks have exhibited remarkable competence in approximating deterministic relationships between inputs and outputs across various domains. Breakthrough developments have been found in computer vision \citep{Krizhevsky2012, Szegedy2014}, speech recognition \citep{Hinton2012}, natural language processing \citep{Sutskever2014}, as well as protein structure prediction \citep{Jumper2021}. These applications harness neural networks to discover unknown relationships from data. On the other hand, neural networks have also been leveraged to substitute known physical models. In process system engineering, some works have been dedicated to accurately representing chemical processes \citep{Misener2023, Ma2022b, Goldstein2022, Henao2011}. The growing interest in surrogate models of chemical processes can be attributed to recent studies into the mixed-integer programming (MIP) formulations for neural networks \citep{Fischetti2018}. This exploration paves the way to integrate process optimization with the expressive power of neural networks, thereby facilitating decision-making over complex systems \citep{Grimstad2019, Anderson2020, Tsay2021, Schweidtmann2019}. An open-source package, OMLT, designed to automate the transformation of trained neural networks has been made available \citep{Ceccon2022}.  

% saved for other paper: In recent years, mixed-integer programming (MIP) formulations for neural network have been studied to facilitate the use in optimization. Early research reformulates the neural network with ReLU activation function into MILP through big-M method \citep{Fischetti2018DeepOptimization}. Bound-tightening approaches for big-M parameters are devised, and the formulation is showcased in an oil production optimization problem \citep{Grimstad2019ReLUPrograms}. Moreover, various competing formulations for ReLU neural networks have been developed to construct stronger relaxation and more efficient MILP \citep{Anderson2020StrongNetworks, Tsay2021Partition-BasedNetworks}. Embedding neural networks into the formulation can potentially lead to large-scale optimization problem. The reduced-space formulation of neural work has also been explored for different nonlinear smooth activation functions, as demonstrated in several case studies in the field of chemical engineering \citep{Schweidtmann2019DeterministicEmbedded}. In addition, an open-source package, OMLT, designed to automate the transformation of trained neural networks has been made available \citep{Ceccon2022OMLT:Toolkit}. 

PINN not only learns surrogate models as a class of function approximators but also integrates the physical constraints \citep{Karniadakis2021}. Hence, PINN has found extensive attention in surrogate modeling for multiphysics problems in which data are usually expensive. Unlike classic numerical methods, PINN typically targets ill-posed problems, where data under certain conditions are not precisely known due to difficulties in measurements in the real world. To address uncertainties stemming from imperfect data and stochastic physical systems, PINN has demonstrated its success in inferring solutions for both forward and inverse problems that involve partial differential equations (PDEs) \citep{Raissi2019}, and uncovering hidden physics in partially known stochastic ordinary differential equations (SDEs) \citep{OLeary2022}. Domain-specific applications include colloidal self-assembly \citep{Nodozi2023}, fluid mechanics \citep{Raissi2020}, thermodynamics \citep{Cai2021} and Model Predictive Control (MPC) \citep{Zheng2023a, Zheng2023b, Alhajeri2022}. These works leverage automatic differentiation and embed differential equations into the loss of a neural network as soft constraints. 

In parallel, a few studies have also attempted to integrate hard constraints into PINN but lack a mathematical guarantee of strict satisfaction \citep{Dener2020, Lu2021}. They adapt the augmented Lagrangian method to the training process, namely aug-Lag hPINN. In this approach, the inner loop addresses unconstrained optimization problems with conventional optimizers, while the outer loop updates the Lagrangian multipliers and penalty factors. Inappropriate initial values may lead to ill-conditioned optimization or being stuck at poor local minima. Technically, aug-Lag hPINN requires sufficient training time until the loss of constraints violation falls below a specific tolerance in every inner loop. This is achieved by a predetermined number of sub-iterations, which results in increased computing time. This approach relies on the assumption that the violation decreases to nearly zero within the inner loops.
\begin{figure}
    \centering
    \includegraphics[width=\linewidth]{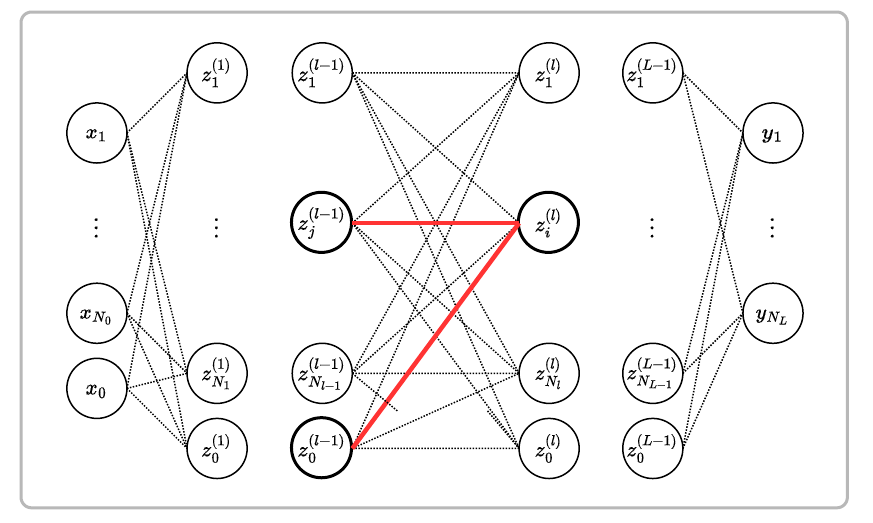}
    \caption{Fully connected feed-forward neural network architectures. The neuron $i$ at the $(l)$th layer is the linear combination of neurons at the $(l-1)$th layer followed by a nonlinear activation $\sigma$, which is
$z^{(l)}_i = \sigma(\sum_j w^{(l-1)}_{ij} z^{(l-1)}_j + b^{(l-1)}_{i0})$. The solid red lines represent $w_{ij}^{(l-1)}$ and $b_{i0}^{(l-1)}$ respectively.}
    \label{fig:NNarchitectures}
\end{figure}

\section{Methodology}
\label{sec:Methodology}
\subsection{Neural Network Structure}
A fully connected feed-forward neural network can be expressed as a multi-layer perceptron (MLP) model that maps input $\mathbf{x}$ to output $\mathbf{y}$ through a number of hidden layers as shown in Figure \ref{fig:NNarchitectures}. The forward equation in matrix form of a feed-forward neural network with $L+1$ layers is defined as
\begin{equation}
\mathbf{z}^{(l)} = \sigma(\mathbf{W}^{(l-1)} \mathbf{z}^{(l-1)} + \mathbf{b}^{(l-1)}) \quad \forall l = 0, ..., L
\end{equation}
where $\mathbf{z}^{(l)}$ denotes the output of the $l$-th layer, $\mathbf{z}^{(0)} = \mathbf{x} \in \mathcal{R}^{N_0}$ is the input, and $\mathbf{z}^{(L)} = \mathbf{\hat{y}} \in \mathcal{R}^{N_L}$ is the output. In every layer, the input neurons from the previous layer, $\mathbf{z}^{(l-1)}$, are weighted by $\mathbf{W}^{(l-1)}$ and biased by $\mathbf{b}^{(l-1)}$, followed by a nonlinear activation function $\sigma(\cdot)$. This results in the neurons, $\mathbf{z}^{(l)} \in \mathcal{R}^{N_l}$, on the $l$-th layer. 

The nature of training a neural network over $N$ training data points is to minimize a loss function over the weights and biases $ \{\mathbf{W}^l, \mathbf{b}^l\}_{l=0}^{L-1}$. The mean squared error (MSE) is one of the widely used loss functions for regression tasks. Denoting these trainable parameters as $\mathbf{\Theta} = \{\mathbf{W}^l, \mathbf{b}^l\}_{l=0}^{L-1}$, the solution to this minimization problem, $\theta^* = \operatorname{arg min}_{\theta \in \mathbf{\Theta}} J(\theta)$ where $J(\theta)$ is the loss function, can be iteratively approximated by a wide range of well-established algorithms. For first-order gradient-based optimizers, the learning direction is the gradient of the loss function with respect to parameters. The gradients can be accessed by chain rule and retrieved through automatic differentiation.

\subsection{Hard Linearly-Constrained Neural Network Architecture}
In certain applications, the relationships between some inputs $\mathbf{x}$ and outputs $\mathbf{y}$ may not be completely unknown. For instance, $
\mathbf{x}$ and $\mathbf{y}$ can contain some variables representing inflows and outflows that are subject to a mass or energy balance. The conservation between them is typically accessible using engineering principles and can be expressed as linear equality constraints, $\mathbf{Ax} + \mathbf{By} = \mathbf{b}$. In the present work, we propose a physics-informed neural network architecture that allows these linear equality constraints to always hold in both training and testing processes.

Before introducing the algebraic representation of the architecture, we first provide the geometric intuition. This specialized architecture can be viewed as a neural network connected to a corrector that adjusts the predictions deviating from linear equality constraints shown in Figure \ref{fig:projection}. For a given input $\hat{\mathbf{x}}$, the output from a neural network prediction $\hat{\mathbf{y}}$ may not satisfy the linear equality constraints. The corrector will output the orthogonal projection of $\hat{\mathbf{y}}$ onto the feasible region, denoted as $\tilde{\mathbf{y}}$.
\begin{figure}
    \centering
    \includegraphics[width=0.8\linewidth]{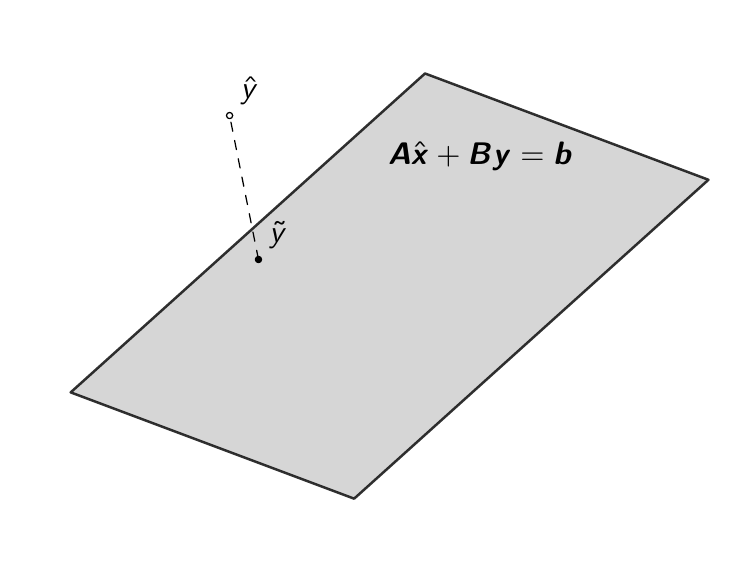}
    \caption{Illustration: for a given input $\mathbf{\hat{x}} \in \mathcal{R}^{N_0}$, neural network prediction $\mathbf{\hat{y}} \in \mathcal{R}^{N_L}$ is orthogonally projected to be $\mathbf{\tilde{y}} \in \mathcal{R}^{N_L}$ that satisfies a system of linear equality constraints $\mathbf{A} \hat{\mathbf{x}} + \mathbf{B y} = \mathbf{b}$. As an illustrative example, a hyperplane is used here to represent a single constraint where $\mathbf{A} \in \mathcal{R}^{1 \times N_0}, \mathbf{B} \in \mathcal{R}^{1 \times N_L}, \mathbf{b} \in \mathcal{R}^1$.}
    \label{fig:projection}
\end{figure}
\begin{figure}
    \centering
    \includegraphics[width=\linewidth]{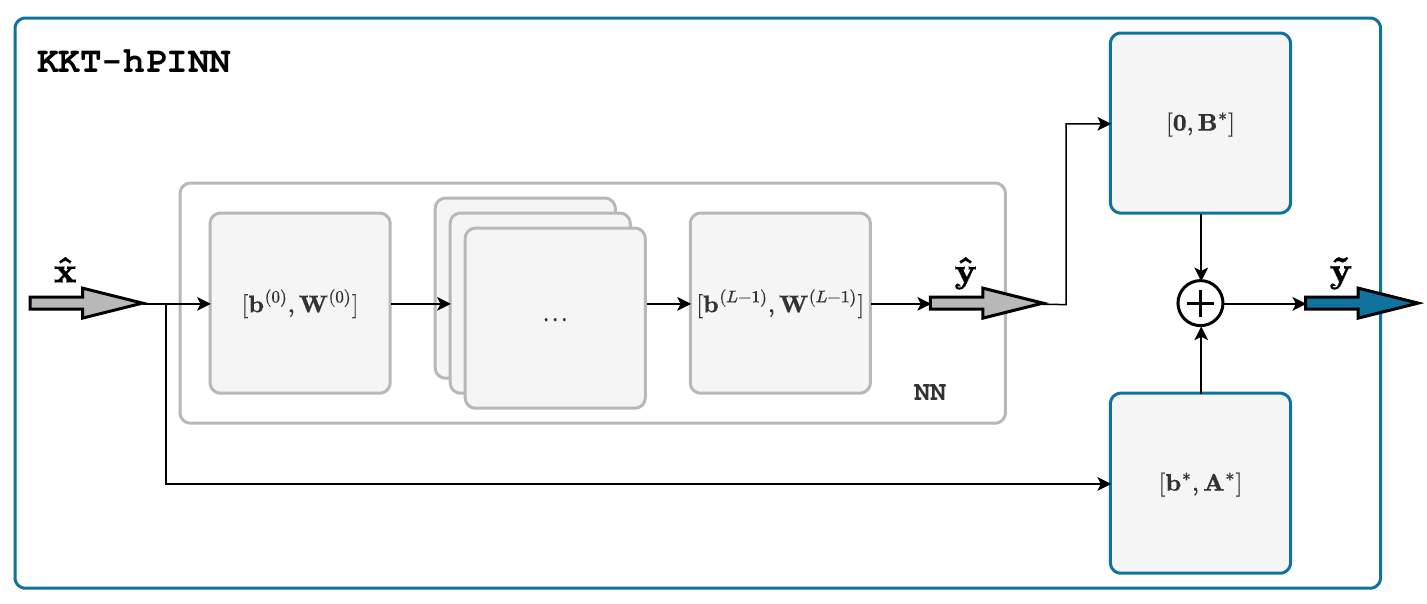}
    \caption{Grey block: illustration of NN architectures; Blue block: illustration of KKT-hPINN architectures consisting of trainable layers and two additional non-trainable projection layers (blue layers). The non-trainable parameters $\mathbf{A}^*$, $\mathbf{b}^*$ and $\mathbf{B}^*$ can be explicitly calculated from \eqref{KKTsolution}.}
    
    \label{fig:NNOPTblocks}
\end{figure}

Specifically, two separate, non-trainable, and fully connected layers are added to a neural network as the corrector, guiding its prediction to the closest feasible point. One layer is a linear transformation of specific input $\mathbf{\hat{x}}$ with fixed weights $\mathbf{A}^*$ and fixed bias $\mathbf{b}^*$, the other layer transforms the output $\mathbf{\hat{y}}$ predicted from the original model $\mathrm{NN} (\mathbf{\Theta}, \mathbf{\hat{x}})$ with a projection matrix $\mathbf{B}^*$. The definitions of $\mathbf{A}^*$, $\mathbf{b}^*$, and $\mathbf{B}^*$ will be specified in Theorem \ref{thm1}.  We name this architecture KKT-hPINN, as it is a hard linearly constrained physics-informed neural network derived from the KKT conditions. 
\begin{Theorem}
\label{thm1}
    Given any input $\mathbf{\hat{x}} \in \mathcal{R}^{N_0}$, a neural network model $\mathbf{\hat{y}} = \mathrm{NN} (\mathbf{\Theta}, \mathbf{\hat{x}}): \mathcal{R}^{N_0} \rightarrow \mathcal{R}^{N_L}$, and prior knowledge about $m$ equality constraints for the input $\mathbf{\hat{x}}$ and the ground truth $\mathbf{y}$: $\mathbf{A} \hat{\mathbf{x}} + \mathbf{By} = \mathbf{b}$ where $\mathbf{A} \in \mathcal{R}^{m \times N_0}, \mathbf{b} \in \mathcal{R}^m, \mathbf{B} \in \mathcal{R}^{m \times N_L}$ and $\mathrm{rank}(\mathbf{B})=m$, the orthogonal projection of $\mathbf{\hat{y}}$ onto a feasible region represented by a system of linear equality constraints, $\mathbf{\tilde{y}}$, has the following analytical solution:
\begin{equation}
\label{KKTsolution}
    \mathbf{\tilde{y}} = \mathbf{A^* \hat{x}} + \mathbf{B^* \hat{y}} + \mathbf{b^*},
\end{equation}
where
\begin{equation}
\begin{aligned}
    \mathbf{A^*} &= -\mathbf{B}^T(\mathbf{B}\mathbf{B}^T)^{-1}\mathbf{A} \nonumber\\
    \mathbf{B^*} &= \mathbf{I} - \mathbf{B}^T(\mathbf{B}\mathbf{B}^T)^{-1}\mathbf{B}\\
    \mathbf{b^*} &= \mathbf{B}^T(\mathbf{B}\mathbf{B}^T)^{-1}\mathbf{b}
\end{aligned}
\end{equation}
\end{Theorem}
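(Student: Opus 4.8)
The plan is to recognize the orthogonal projection as the solution of a convex equality-constrained quadratic program and solve it exactly via its KKT system. First I would write the projection explicitly as
\begin{equation}
\tilde{\mathbf{y}} = \operatorname{arg\,min}_{\mathbf{y} \in \mathcal{R}^{N_L}} \ \tfrac{1}{2}\|\mathbf{y} - \hat{\mathbf{y}}\|_2^2 \quad \text{subject to} \quad \mathbf{B}\mathbf{y} = \mathbf{b} - \mathbf{A}\hat{\mathbf{x}},
\end{equation}
noting that the objective is strictly convex and the constraint is affine, so the feasible set (nonempty because $\mathrm{rank}(\mathbf{B})=m$ implies $\mathbf{B}$ is surjective onto $\mathcal{R}^m$) is a nonempty affine subspace and the minimizer is unique. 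Because the problem is a convex program with only linear equality constraints, the KKT conditions are both necessary and sufficient for global optimality, which is what licenses the "analytical solution" claim.

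Next I would introduce a multiplier $\boldsymbol{\lambda} \in \mathcal{R}^m$ and form the Lagrangian $\mathcal{L}(\mathbf{y},\boldsymbol{\lambda}) = \tfrac{1}{2}\|\mathbf{y}-\hat{\mathbf{y}}\|_2^2 + \boldsymbol{\lambda}^{T}(\mathbf{B}\mathbf{y} - \mathbf{b} + \mathbf{A}\hat{\mathbf{x}})$. Stationarity gives $\mathbf{y} - \hat{\mathbf{y}} + \mathbf{B}^{T}\boldsymbol{\lambda} = \mathbf{0}$, i.e. $\mathbf{y} = \hat{\mathbf{y}} - \mathbf{B}^{T}\boldsymbol{\lambda}$, and primal feasibility is $\mathbf{B}\mathbf{y} = \mathbf{b} - \mathbf{A}\hat{\mathbf{x}}$. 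Substituting the first into the second yields $\mathbf{B}\mathbf{B}^{T}\boldsymbol{\lambda} = \mathbf{B}\hat{\mathbf{y}} - \mathbf{b} + \mathbf{A}\hat{\mathbf{x}}$. Here I would invoke the rank hypothesis: since $\mathbf{B}$ has full row rank $m$, the Gram matrix $\mathbf{B}\mathbf{B}^{T} \in \mathcal{R}^{m\times m}$ is symmetric positive definite (for any $\mathbf{v}\neq\mathbf{0}$, $\mathbf{v}^{T}\mathbf{B}\mathbf{B}^{T}\mathbf{v} = \|\mathbf{B}^{T}\mathbf{v}\|_2^2 > 0$ because $\mathbf{B}^{T}$ has trivial kernel), hence invertible, so $\boldsymbol{\lambda} = (\mathbf{B}\mathbf{B}^{T})^{-1}(\mathbf{B}\hat{\mathbf{y}} - \mathbf{b} + \mathbf{A}\hat{\mathbf{x}})$.

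Finally I would back-substitute this $\boldsymbol{\lambda}$ into $\tilde{\mathbf{y}} = \hat{\mathbf{y}} - \mathbf{B}^{T}\boldsymbol{\lambda}$, collect the terms multiplying $\hat{\mathbf{x}}$, $\hat{\mathbf{y}}$, and the constant, and read off $\mathbf{A}^{*} = -\mathbf{B}^{T}(\mathbf{B}\mathbf{B}^{T})^{-1}\mathbf{A}$, $\mathbf{B}^{*} = \mathbf{I} - \mathbf{B}^{T}(\mathbf{B}\mathbf{B}^{T})^{-1}\mathbf{B}$, $\mathbf{b}^{*} = \mathbf{B}^{T}(\mathbf{B}\mathbf{B}^{T})^{-1}\mathbf{b}$, giving exactly \eqref{KKTsolution}. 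As a sanity check one can verify directly that $\mathbf{B}\tilde{\mathbf{y}} = \mathbf{b} - \mathbf{A}\hat{\mathbf{x}}$, confirming feasibility. There is no real obstacle here — the argument is a short, standard computation; the only points that genuinely require care are (i) citing convexity to make KKT sufficient rather than merely necessary, and (ii) using $\mathrm{rank}(\mathbf{B}) = m$ precisely where it is needed, namely to guarantee both feasibility of the projection problem and invertibility of $\mathbf{B}\mathbf{B}^{T}$.
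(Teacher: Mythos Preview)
Your proposal is correct and follows essentially the same approach as the paper: formulate the projection as the equality-constrained convex QP \eqref{OptLayerFormulation}, invoke convexity so that the KKT conditions characterize the unique minimizer, and solve the resulting linear KKT system to read off $\mathbf{A}^*,\mathbf{B}^*,\mathbf{b}^*$. If anything, your write-up is more complete than the paper's, since you explicitly carry out the elimination of $\boldsymbol{\lambda}$ and spell out where the hypothesis $\mathrm{rank}(\mathbf{B})=m$ is used (feasibility and invertibility of $\mathbf{B}\mathbf{B}^{T}$), whereas the paper simply displays the block KKT system and asserts the solution.
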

\begin{proof}
    Correcting predictions that deviate from constraint satisfaction with minimum Euclidean distance can be conceptualized as an orthogonal projection onto the feasible region of physical constraints, as depicted in Figure \ref{fig:projection}. This can be formulated as the following quadratic programming problem after the forward calculation of $\mathbf{\hat{y}}$ in each iteration:
\begin{equation}
\label{OptLayerFormulation}
\begin{aligned}
    \mathbf{\tilde{y}} = \operatorname{argmin}_{\mathbf{y}} \quad & \frac{1}{2} \lVert \mathbf{y} - \mathbf{\hat{y}}\rVert^2 \quad \text{s.t.} \quad \mathbf{A \hat{x}} + \mathbf{By} = \mathbf{b}
\end{aligned}
\end{equation}
where $\mathbf{A}$, $\mathbf{B}$ and $\mathbf{b}$ are constant matrices and vector involved in certain prior knowledge. That is, instead of feeding the prediction $\mathbf{\hat{y}}$ directly into the loss function for backpropagation, we first project it onto a feasible region where the linear equality is strictly satisfied.

Because of the convexity of problem \ref{OptLayerFormulation}, the optimal primal and dual solutions $\mathbf{\tilde{y}}$ and $\boldsymbol{\lambda}^*$ can be found by solving the KKT conditions below.
\begin{equation*}
    \begin{bmatrix}
        \mathbf{I} & \mathbf{B}^T \\
        \mathbf{B} & \mathbf{0}
    \end{bmatrix} 
    \begin{bmatrix}
        \mathbf{\tilde{y}} \\
        \boldsymbol{\lambda}^*
    \end{bmatrix}
    =
    \begin{bmatrix}
        \mathbf{\hat{y}} \\
        \mathbf{b} - \mathbf{A \hat{x}}
    \end{bmatrix}
\end{equation*}
In the neural network setting, the optimal solution $\mathbf{\tilde{y}}$ in \eqref{OptLayerFormulation} can be expressed as the sum of outputs from two additional projection layers with fixed parameters $\mathbf{A^*}$, $\mathbf{B^*}$ and $\mathbf{b^*}$. 
\end{proof}

This architecture design presents a distinctive methodology with key features outlined in the following remarks:
\begin{remark}[Applicability to other architectures]
   The proposed projection layer can be applied to any existing neural network architectures, such as convolutional neural networks and recurrent neural networks, instead of being limited to MLP. 
\end{remark}

\begin{remark}[Compatibility to PINN]
\label{rmk:compatibility}
    The KKT-hPINN can be compatible with the PINNs, i.e., enforcing hard linear equality constraints and soft nonlinear constraints simultaneously. This compatibility arises because the hard constraints are introduced using a different architecture, while the soft constraints are embedded into the loss function.
\end{remark}

\begin{remark}[Difference with post projection]
\label{rmk:postprojection}
    It is worth noting that embedding the projection layers within the architecture should be distinguished from trivially applying orthogonal projection to the predictions obtained from a neural network in the testing process. The latter strategy, namely post-projection during testing, does not take into account the ground truths of the output in the training data when the performing projection step and thus may compromise the accuracy of the predictions. In contrast, the presence of projection layers not only imposes hard constraints but also changes how the model learns in every iteration of the training process. A quantitative explanation is that the projection layers will change the gradient descent direction, leading to
    different parameters for KKT-hPINN compared to neural networks without the projection layers.
\end{remark}

\begin{remark}[Loss functions]
    The KKT-hPINN architecture changes the loss function so that the learning direction, the gradient of the loss function, is also changed. When the MSE is selected as the criterion of the loss function, redefine $\mathbf{z}^{(l)}:= [1, \mathbf{z}^{(l)}] \in \mathcal{R}^{N_l + 1}$ and concatenate bias $\mathbf{b}^{(l)}$ with weight $\mathbf{W}^{(l)}$ at each layer as $\boldsymbol{\theta}^{(l)} = [\mathbf{b}^{(l)}, \mathbf{W}^{(l)}]$ for notation simplicity, the optimization problems solved in the training of the NN, PINN, and KKT-hPINN can be expressed as
    \begin{align}
& \text{NN:} \quad \operatorname{min}_{\mathbf{\Theta}} \quad
    \frac{1}{2N} \sum_{i=1}^{N} \lVert \mathrm{NN} (\mathbf{\Theta}, \mathbf{x}^i) - \mathbf{y}^i\rVert^2 \\
& \text{PINN:} \quad \operatorname{min}_{\mathbf{\Theta}} \quad
    \frac{1}{2N} \sum_{i=1}^{N} (\lVert \mathrm{NN} (\mathbf{\Theta}, \mathbf{x}^i) - \mathbf{y}^i\rVert^2  + \lambda \lVert \mathbf{A} \mathbf{x}^i + \mathbf{B} \mathrm{NN} (\mathbf{\Theta}, \mathbf{x}^i) - \mathbf{b} \rVert^2) 
    \label{eq:PINN loss fn}\\
&  \text{KKT-hPINN:} \quad \operatorname{min}_{\mathbf{\Theta}} \quad
    \frac{1}{2N} \sum_{i=1}^{N} \lVert \mathbf{A^*}\mathbf{x}^i + \mathbf{B^*} \mathrm{NN}(\mathbf{\Theta}, \mathbf{x}^i) + \mathbf{b^*} - \mathbf{y}^i\rVert^2
\end{align}
 KKT-hPINN enforces the model to update the parameter in a way such that inherent linear patterns present within the simulation data are always satisfied. It is anticipated that the convergence can be improved by this characteristic of KKT-hPINN.
\end{remark}

\section{Case Studies}
\label{sec:Case Studies}
 This section covers case studies of a CSTR unit, a DME-DEE chemical plant, and an extractive distillation subsystem. The performance of the neural network (NN), the physics-informed neural network (PINN), the neural network with post projection in Remark \ref{rmk:postprojection} (NNPost), and the proposed KKT-hPINN are examined and compared. 

 \begin{table}[ht]
    \centering
    \begin{tabular}{lccc}
    \toprule
          Case&CSTR&  Plant&  Distillation\\
          \midrule
          Hidden layers&2&  2&  2\\
          \midrule
          Neurons/layer&12&  32&  32\\
          \midrule
          Learning rate&$10^{-4}$ &  $10^{-4}$ &  $10^{-4}$ \\
          \midrule
          Batch size&16&  16&  16\\
          \midrule
          Samples&$\sim 1500$ &$\sim 1200$  &$\sim 5000$  \\
          \bottomrule
    \end{tabular}
    \caption{Neural network architectures and hyperparameter settings}
    \label{table:hyperparameters}
\end{table}

\paragraph{Experiment setup}
The datasets are generated from simulators constructed in Aspen Plus V.11. The Aspen simulator for the CSTR unit is customized with a built-in model, while those for the chemical plant and the extractive distillation subsystem can be found in the supplementary files of \cite{Ma2022b}. The preliminary data obtained from Aspen may not be valid due to unconvergence or numerical issues during the simulations. To ensure validity, all the data marked with severe errors are eliminated. After that, the data are filtered if they violate a certain tolerance for specific linear equality constraints in the case studies. The tolerances for the data of the CSTR unit, the DME-DEE chemical plant, and the extractive distillation unit are set to be $10^{-8}$, $10^{-6}$, and $10^{-8}$ respectively. 

The data from the feasible simulations is stored and scaled using maximum absolute scaling, i.e., dividing the data by the maximum absolute value of each variable in the dataset. The surrogate models are built using the PyTorch package in Python \citep{Paszke2019}. The neural network architectures used in the case studies, along with the hyperparameter settings, are documented in Table \ref{table:hyperparameters}. NN, PINN, NNPost, and KKT-hPINN adopt the same settings in each case study. 

The models are trained and evaluated with respect to Root Mean Squared Error (RMSE). During training, the parameters are updated in every epoch to minimize the RMSE value between the current prediction and ground truth. A validation dataset is used to evaluate the extent of overfitting in models during each epoch. This assesses the RMSE value without influencing parameter updates. By default, $20\%$ samples in Table \ref{table:hyperparameters} are used for validation. A well-trained model is expected to maintain a negligible gap between training and validation RMSE values, with both values being consistently low. 
%In other words, it is aiming for low and comparable RMSE values in both datasets.

The well-trained model is then applied to an unseen test dataset, where the RMSE values can vary inherently in different runs, because training a neural network cannot guarantee global optimality. To ensure the robustness of the results, this process is repeated 10 times. Besides, generating high-fidelity data can be challenging and time-consuming in some fields. To investigate the effect of the size of the training dataset, models are trained using a different number of training samples and compared in terms of their improvements relative to a tranditional neural network. 

\begin{figure}
    \centering
    \includegraphics[width=\linewidth]{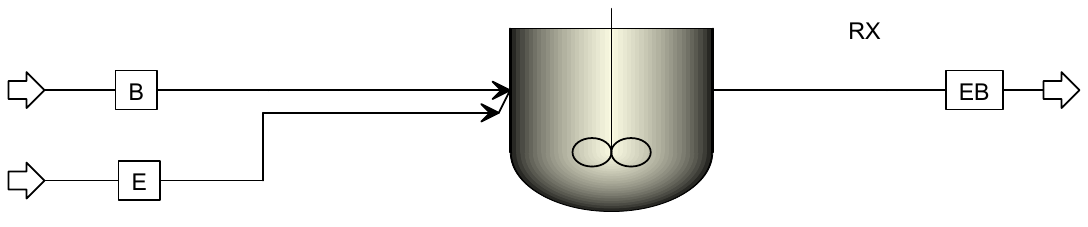}
    \caption{Process flowsheet of the CSTR unit.}
    \label{fig:cstrflowsheet}
\end{figure}

\subsection{CSTR Unit}\footnote{The proposed model and case studies are made available on our GitHub repository at: \url{https://github.com/li-group/KKThPINN.git}}
To illustrate the construction of a KKT-hPINN, a simple simulator of CSTR as shown in Figure \ref{fig:cstrflowsheet} is considered here. Suppose a CSTR converts reactants, Benzene (B) and Ethylene (E), into the product, Ethylbenzene (EB), and the process follows a first-order chemical reaction: 
\begin{equation}
\label{reactioneq}
    B + E \rightarrow EB
\end{equation}

\begin{table}
  \centering
  \begin{tabular}{lc}
    \toprule
    Variables & Description \\
    \midrule
    $x_1$ & Temperature of the RX unit \\
    \midrule
    $x_2$ & Molar flow rate of Benzene in the B stream \\
    \midrule
    $x_3$ & Molar flow rate of Ethylene in the E stream \\
    \midrule
    $y_1$ & Molar flow rate of Ethylbenzene in the EB stream \\
    \midrule
    $y_2$ & Molar flow rate of Benzene in the EB stream \\
    \midrule
    $y_3$ & Molar flow rate of Ethylene in the EB stream  \\
    \midrule
    $\mathbf{Ax} + \mathbf{By} = \mathbf{b}$ & 
    $\begin{cases}
        x_2 - x_3 - y_2 + y_3 = 0,  & \text{Reactants consumption}\\
        x_2 - y_1 - y_2 = 0,  & \text{EB production}
    \end{cases}$ \\ 
    \bottomrule
  \end{tabular}
  \caption{Variables and constraint of the CSTR unit.}
  \label{cstrvariables}
\end{table}
Pure reactants are fed at a steady state. The CSTR has fixed volume and working pressure. The variables left for design are the molar flow rate of benzene and ethylene, and the working temperature of CSTR. The goal is to leverage the neural network as a universal approximator to train a surrogate model that predicts the molar flow of ethylene, benzene, and ethylbenzene in the output stream. In practice, underlying equations of a physical model are usually much more complicated than the equation of reaction kinetics for this CSTR unit, which necessitates the use of a surrogate model.  

\begin{figure}
    \centering
    \includegraphics[scale=0.8]{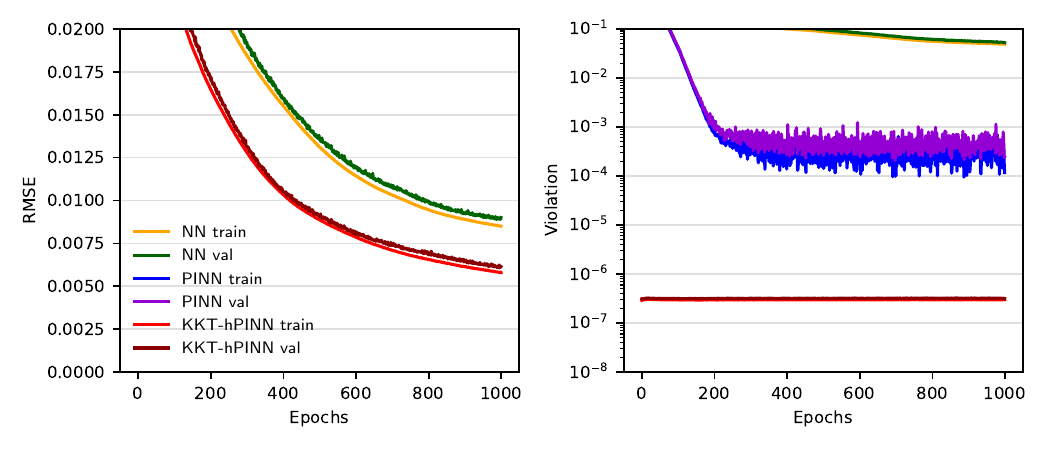}
    \caption{Learning curve of the CSTR surrogate models. Left: training RMSE and validation RMSE (RMSE values for PINN are outside the limits); Right: the magnitude of violation of the linear equality constraints.}
    \label{fig:LearningCSTR}
\end{figure}

Two linear equations related to conservation properties are applicable to the input and the output streams: From the stoichiometry in Equation \eqref{reactioneq}, it can be readily deduced that the consumption of benzene is equal to the consumption of ethylene and the production of ethylbenzene, which is expressed in the Table \ref{cstrvariables}. It should noted that linear equality constraints embedded here should be independent according to Theorem \ref{thm1}. Therefore, dependent constraints, such as the relationship between the consumption of ethylene and the production of ethylbenzene, are not included. With the prior knowledge of $\mathbf{A}$, $\mathbf{B}$ and $\mathbf{b}$ in Table \ref{cstrvariables}, the KKT-hPINN is able to find the projection matrices and keep projecting the predictions onto feasible regions.

\paragraph{Comparison of training and validation loss} The trend of RMSE over epochs in Figure \ref{fig:LearningCSTR} illustrates the generalizability of NN, PINN and KKT-hPINN models to the validation dataset. The RMSE value of the KKT-hPINN becomes lower than that of the NN within just 1000 epochs. For the PINN, the penalty term in the loss function in the \eqref{eq:PINN loss fn} is excluded from RMSE calculation for a fair comparison. The presence of soft constraints dramatically hinders convergence of the PINN, leading to RMSE values that are notably outside the limits depicted in this figure.

\paragraph{Comparison of the violation of constraints} By substituting the predictions into the linear equality constraints, Figure \ref{fig:LearningCSTR} also indicates the extent to which they violate the conservation laws that are established as constraints. It is found that the stoichiometry does not hold in the NN model, with the residual of constraints being around $10^{-1}$. Given the fact that the data has been scaled, $10^{-1}$ is a significant violation. Even though much higher RMSE values are found in the PINN, it does result in a lower violation fluctuating around $10^{-4}$ because of the presence of penalty term in the loss function. On the contrary, the violation in the KKT-hPINN is always controlled at $10^{-7}$, which adequately meets practical needs. It should be noted that PyTorch tensors operate as float32 by default, which could potentially result in a lower numerical precision. This accounts for the negligible decrease in the violation value from the tolerance of $10^{-8}$ to $10^{-7}$. 

\begin{figure}
    \centering
    \includegraphics[width=\linewidth]{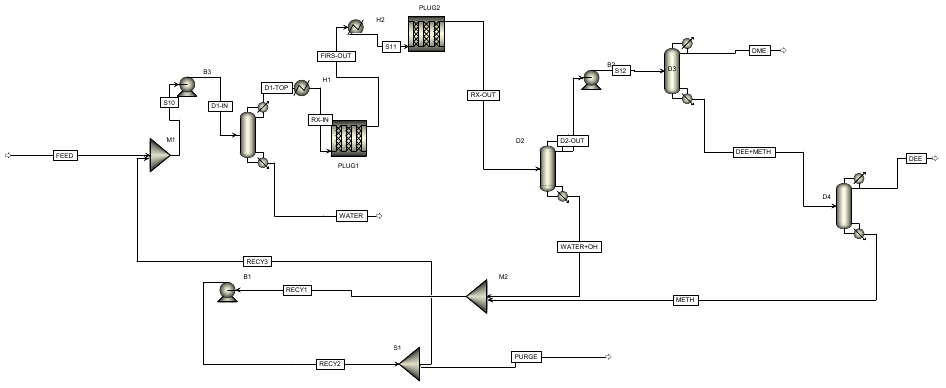}
    \caption{Process flowsheet of the DME-DEE chemical plant.}
    \label{fig:plantflowsheet}
\end{figure}

\subsection{DME-DEE Chemical Plant}
This case study focuses on a chemical plant where methanol, ethanol, and water are used as feed materials to manufacture DME and DEE as shown in Figure \ref{fig:plantflowsheet}. The feed is pre-processed to remove waste water and then transferred into reactor units. Following this, the products and reactants are separated and refined, with the heavier components recycled back into the preprocessing units. For illustrative simplicity, we assume that all the units have been designed. In other words, the entire system is fixed and
we aim to build a surrogate model that predicts the outflow given the inflow and recycling, treating the system as a black box. Nevertheless, it is straightforward to conclude that the flow-in is equal to the flow-out by the mass balance in Table \ref{plantvariables}, and this serves as the single hard constraint in this model. 

\begin{table}[ht]
  \centering
  \scalebox{0.90}{
  \begin{tabular}{lc}
    \toprule
    Variables & Description \\
    \midrule
    $x_1$ & Mass flow rate of methanol in the FEED stream \\
    \midrule
    $x_2$ & Mass flow rate of ethanol in the FEED stream \\
    \midrule
    $x_3$ & Mass flow rate of water in the FEED stream \\
    \midrule
    $x_4$ & Total mass flow rate of the PURGE stream \\
    \midrule
    $y_1$ & Total mass flow rate of the DME stream \\
    \midrule
    $y_2$ & Mass flow rate of DME in the DME stream \\
    \midrule
    $y_3$ & Total mass flow rate of the DEE stream \\
    \midrule
    $y_4$ & Mass flow rate of DEE in the DEE stream \\
    \midrule
    $y_5$ & Total mass flow rate of the WATER stream \\
    \midrule
    $\mathbf{Ax} + \mathbf{By} = \mathbf{b}$ & 
    $\begin{cases}
        x_1 + x_2 + x_3 - x_4 - y_1 - y_3 - y_5 = 0,  & \text{Mass balance}\\
    \end{cases}$ \\ 
    \bottomrule
  \end{tabular}}
  \caption{Variables and constraint of the DME-DEE chemical plant.}
  \label{plantvariables}
\end{table}

\paragraph{Comparison of training and validation loss} Figure \ref{fig:LearningPlant} shows that all the models can be generalized to the validation dataset. The deterministic relationship in this dataset is easier to represent, as all models yield very low RMSE values. In spite of this, KKT-hPINN is still trained to a lower loss in the same number of epochs. 

\begin{figure}
    \centering
    \includegraphics[scale=0.8]{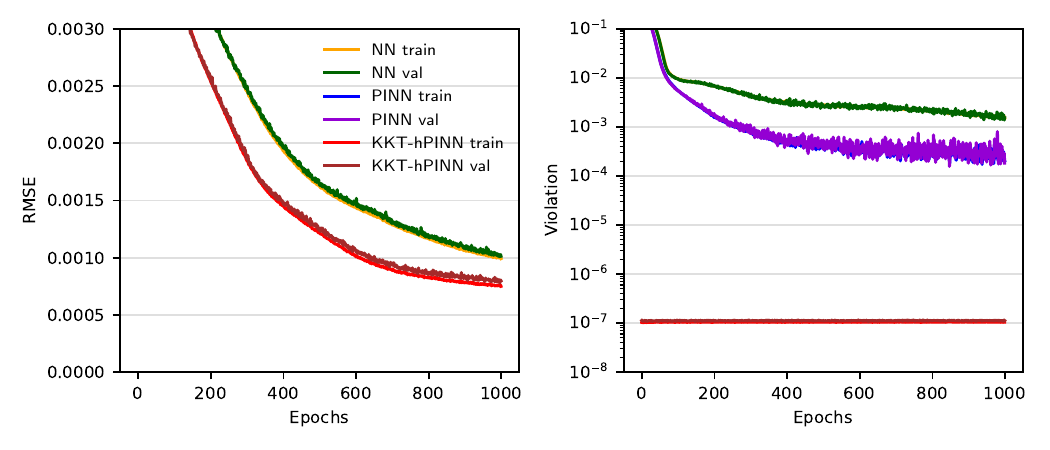}
    \caption{Learning curve of the DME-DEE chemical plant surrogate models. Left: training RMSE and validation RMSE (RMSE values for PINN are outside the limits); Right: the magnitude of violation of the linear equality constraints.}
    \label{fig:LearningPlant}
\end{figure}

\begin{table}
  \caption{Average RMSE $(\times 10^{-3})$ over 10 runs in DME-DEE Chemical Plant test dataset in 1000 epochs. Overall: RMSE for all variables; 
  Constrained: RMSE for the variables involved in the linear equality constraints; Unconstrained: RMSE for the variables not involved in the linear equality constraints.}
  \centering
  \begin{tabular}{llll}
  \toprule
  \cmidrule(r){1-2}
    Model       & Overall               & Constrained           & Unconstrained      \\
    \midrule
    NN   
    & $1.039 \pm 0.262$ & $0.986 \pm 0.257$ & $1.090 \pm 0.254$\\
    NNPost          
    & $1.015 \pm 0.249$ & $0.964 \pm 0.251$ & $1.086 \pm 0.252$\\
    PINN 
    & $4.203 \pm 0.333$ & $4.187 \pm 0.271$ & $4.188 \pm 0.620$\\
    KKT-hPINN 
    & $0.767 \pm 0.082$ & $0.713 \pm 0.073$ & $0.842 \pm 0.091$\\
    \bottomrule
  \end{tabular}
  \label{TablePlant}
\end{table}

\paragraph{Comparison of the violation of constraints} The magnitude of violation of mass balance is strictly maintained around $10^{-7}$ in the KKT-hPINN model. On the other hand, the predictions of NN and PINN cause higher violations, both around $10^{-3}$. In this case, PINN does not exhibit a significant improvement over NN. This implies that determining the penalty hyperparameter for soft constraints heavily depends on trial and error. Inappropriate hyperparameters can result in slow convergence or significant violations. 

\begin{figure}
    \centering
    \includegraphics[scale=0.8]{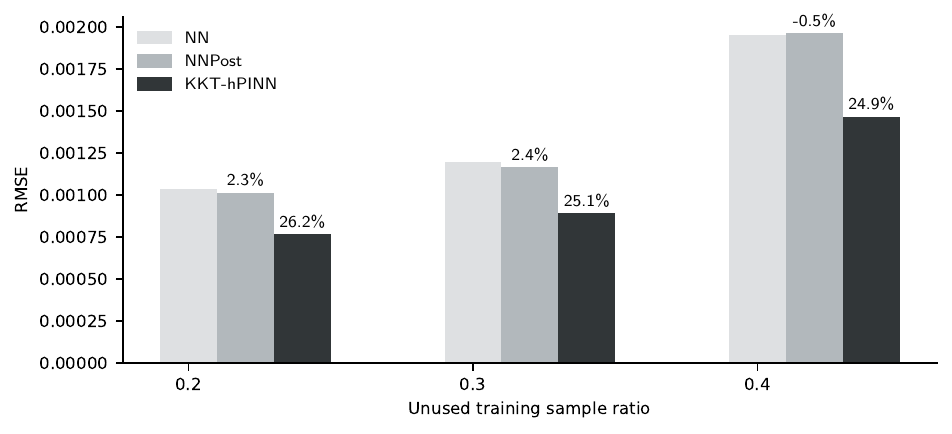}
    \caption{Bar: Average RMSE of KKT-hPINN and NN post-project over 10 runs in DME-DEE Chemical Plant test dataset. 20\%, 30\% and 40\% samples in the dataset are not used in the training process. Percentage: improvement of KKT-hPINN and NNPost relative to NN}
    \label{fig:BarPlant}
\end{figure}

\begin{figure}
    \centering
    \includegraphics[width=\linewidth]{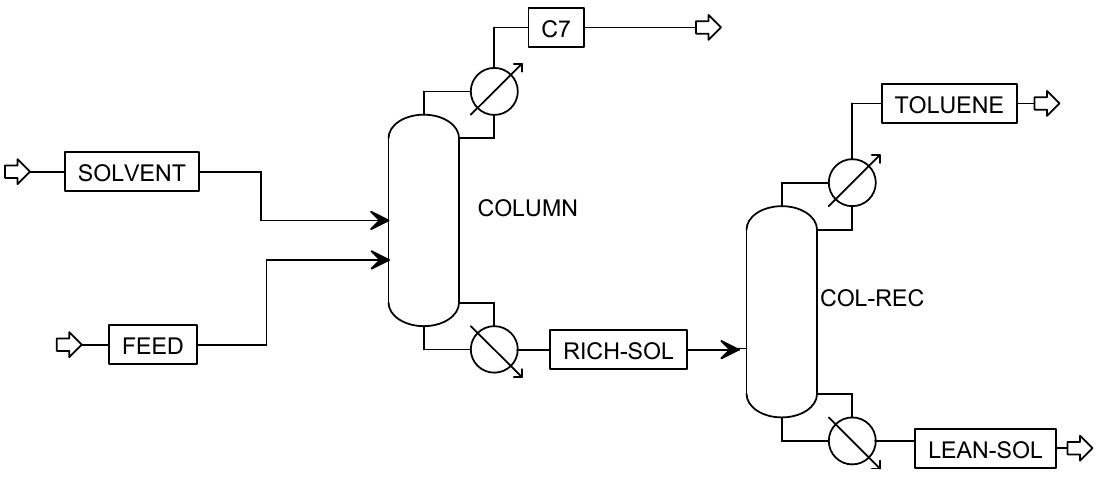}
    \caption{Process flowsheet of the extractive distillation subsystem.}
    \label{fig:distillationflowsheet}
\end{figure}

\paragraph{Test performance} In Table \ref{TablePlant}, it is observed that KKT-hPINN remains the superior model among the others with the lowest test RMSE values. An interesting observation is that KKT-hPINN not only improves the prediction of outputs that are subject to the constraints but also lowers the loss of outputs that are not involved in the constraints. This suggests that the addition of two projection layers provides the optimizer with a more physically feasible learning direction, resulting in the parameters that cause smaller errors. 

Experiments with different training dataset sizes are conducted 10 times for each. The result in Figure \ref{fig:BarPlant} shows that, regardless of the number of training samples, the KKT-hPINN model always decreases the overall test loss by around $25\%$ relative to the NN. On the other hand, 
the post-projection cannot bring any significant improvement and sometimes may even cause decline. This aligns with Remark \ref{rmk:postprojection}, highlighting the sharp distinction between post-projection after the training process and continuous projection during the training process. In addition, the PINN greatly deteriorates the model prediction so that it is not compared in this figure.  

\subsection{Extractive Distillation Subsystem}
\begin{table}[ht]
  \centering
  \scalebox{0.90}{
  \begin{tabular}{lc}
    \toprule
    Variables & Description \\
    \midrule
    $x_1$ & Molar flow rate of phenol in the SOLVENT stream \\
    \midrule
    $x_2$ & Reflux ratio of COLUMN column \\
    \midrule
    $x_3$ & Distillate rate of COLUMN column \\
    \midrule
    $x_4$ & Reflux ratio of COL-REC column \\
    \midrule
    $x_5$ & Distillate rate of COL-REC column \\
    \midrule
    $y_1$ & Molar flow rate of n-heptane in the C7 stream \\
    \midrule
    $y_2$ & Molar flow rate of toluene in the TOLUENE stream \\
    \midrule
    $y_3$ & Condenser heat duty of COLUMN column \\
    \midrule
    $y_4$ & Reboiler heat duty of COLUMN column \\
    \midrule
    $y_5$ & Condenser heat duty of COL-REC column \\
    \midrule
    $y_6$ & Reboiler heat duty of COL-REC column \\
    \midrule
    $y_7$ & Molar flow rate of toluene in the C7 stream \\
    \midrule
    $y_8$ & Molar flow rate of phenol in the C7 stream \\
    \midrule
    $y_9$ & Molar flow rate of n-heptane in the TOLUENE stream \\
    \midrule
    $y_{10}$ & Molar flow rate of phenol in the TOLUENE stream \\
    \midrule
    $\mathbf{Ax} + \mathbf{By} = \mathbf{b}$ & 
    $\begin{cases}
        x_3 - y_1 - y_7 - y_8 = 0,  & \text{C7 fractions}\\
        x_5 - y_2 - y_9 - y_{10} = 0,  & \text{TOLUENE fractions}
    \end{cases}$ \\ 
    \bottomrule
  \end{tabular}}
  \caption{Variables and constraint of the extractive distillation subsystem.}
  \label{distillationvariables}
\end{table}

Here, a case study of an extractive distillation subsystem as shown in Figure \ref{fig:distillationflowsheet} is presented, in which the design includes operating specifications for two distillation columns and the flow rate of the solvent. The Feed stream is flowing at a constant rate and consists of a 50/50 mixture of n-heptane and toluene. Phenol, a solvent, is used to aid in the separation of the azeotropic mixture of n-heptane and toluene. The first distillation column is primarily responsible for the separation of n-heptane, which is collected at the top. Following that, the second distillation column primarily separates toluene, while the phenol solvent is recovered at the bottom. The internal design of distillation columns, including the number of stages and the type of reboiler and condenser, has already been established. However, the optimal operating specifications have not been determined. In this case, the distillate rate and reflux ratio are chosen to specify the distillation columns. The goal is to train a surrogate model that predicts heat duties and flow rates of each component in the distillate streams. As no chemical reaction occurs in the distillation process, the sum of the molar flow rate of n-heptane, toluene and phenol is always equal to the distillate rate as shown in Table \ref{distillationvariables}.

\paragraph{Comparison of training and validation loss} 
All three models reach the plateau in about 1000 epochs, and KKT-hPINN achieves a lower RMSE value. As observed in Figure \ref{fig:LearningDistillation}, the RMSE values of these models exhibit considerable differences. The hard constraints still decrease the losses of KKT-hPINN, improving its ability to generalize effectively to noise-free simulation data. 

\begin{figure}
    \centering
    \includegraphics[scale=0.8]{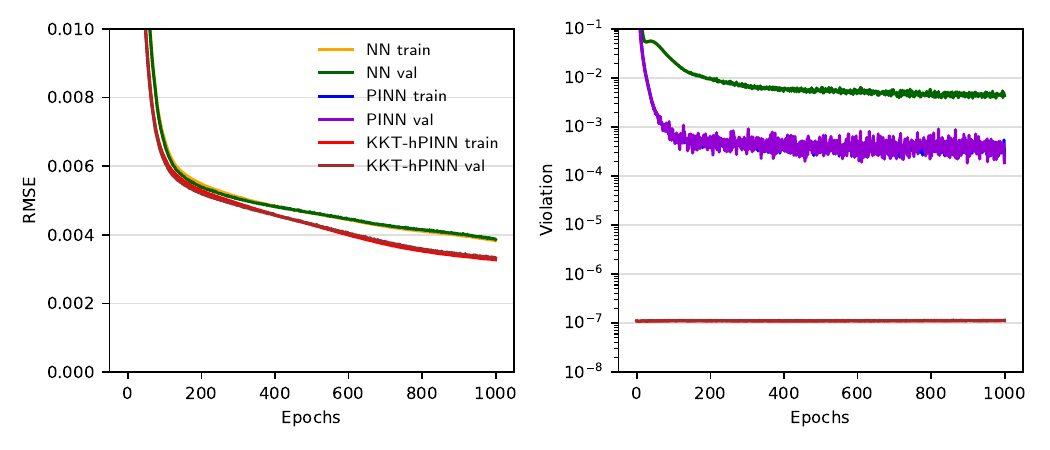}
    \caption{Learning curve of the extractive distillation surrogate models. Left: training RMSE and validation RMSE (RMSE values for PINN are outside the limits); Right: the magnitude of violation of the linear equality constraints.}
    \label{fig:LearningDistillation}
\end{figure}

\paragraph{Comparison of the violation of constraints} The result in Figure \ref{fig:LearningDistillation} shows that the KKT-hPINN still adheres strictly to the hard constraints, whereas violations of PINN fluctuate near $10^{-4}$. Even though the penalty hyperparameter chosen in this run can reduce the violation values to $10^{-4}$ in a small number of epochs, they reach a plateau for the remaining epochs. 

\begin{table}
  \caption{Average RMSE $(\times 10^{-3})$ over 10 runs in extractive distillation test dataset in 1000 epochs. Overall: RMSE for all variables; 
  Constrained: RMSE for the variables involved in the linear equality constraints; Unconstrained: RMSE for the variables not involved in the linear equality constraints.}
  \centering
  \begin{tabular}{llll}
  \toprule
  \cmidrule(r){1-2}
    Model       & Overall               & Constrained           & Unconstrained      \\
    \midrule
    NN   
    & $4.288 \pm 0.894$ & $5.070 \pm 1.209$ & $2.676 \pm 0.215$\\
    NNPost          
    & $4.283 \pm 0.894$ & $5.064 \pm 1.209$ & $2.676 \pm 0.215$\\
    PINN 
    & $15.549 \pm 2.039$ & $18.050 \pm 2.389$ & $10.623 \pm 2.171$\\
    KKT-hPINN 
    & $3.504 \pm 0.202$ & $4.090 \pm 0.239$ & $2.359 \pm 0.208$\\
    \bottomrule
  \end{tabular}
  \label{TableDistillation}
\end{table}

\begin{figure}
    \centering
    \includegraphics[scale=0.8] {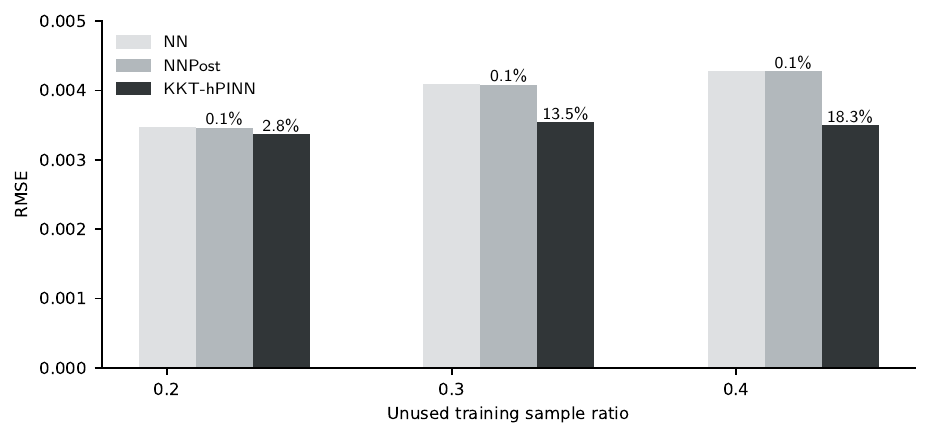}
    \caption{Bar: Average RMSE over 10 runs in extractive distillation test dataset. 20\%, 30\% and 40\% samples in the dataset are not used in the training process. Percentage: improvement of KKT-hPINN and NNPost relative to NN}
    \label{fig:BarDistillation}
\end{figure}

\paragraph{Test performance} The Table \ref{TableDistillation} also indicates that PINN and KKT-hPINN have the highest and the lowest RMSE values respectively. This suggests that the choice of an appropriate penalty value for PINN is critical, while KKT-hPINN guarantees hard constraints using an analytical solution. 
%It is worth noting that only linear equality constraints are introduced to the PINN here for a fair comparison. However, in typical applications of the PINN, additional physical constraints beyond linear constraints are often introduced. The PINN with more soft constraints may outperform the neural network and achieve even more improvements by combining with the KKT-hPINN, as they do not conflict with each other based on the Remark \ref{rmk:compatibility}.

It is interesting that the RMSE values of all three models are almost at the same level when only $20\%$ of the data is not used for training in Figure \ref{fig:BarDistillation}. This may be explained by the fact that the dataset in this particular example contains a larger number of samples that are relatively sufficient for NN to represent the subsystem and become as competent as KKT-hPINN. However, when data are in shortage and hard to represent, projecting unconstrained predictions onto feasible regions where the data are known to reside becomes beneficial again. 

\section{Conclusions}
\label{sec:Conclusions}
Process optimization necessitates a more advanced surrogate model that can strictly enforce hard constraints and exhibit superior accuracy. A physics-informed neural network with hard linear equality constraints, KKT-hPINN, is proposed in this work. By introducing projection layers that embody an analytical solution derived from KKT conditions, hard linear equality constraints are rigorously guaranteed. They guide the parameters to update towards a more physically consistent and more accurate solution. Three case studies have been presented to illustrate its utilization as unit-level, subsystem-level, and plant-level surrogate models in process systems. In all cases, it is observed that the loss decreases more rapidly in KKT-hPINN, leading to higher accuracy compared to other neural networks. More importantly, all numerical results align with the mathematical guarantee of KKT-hPINN and distinguish it from the neural network and PINN. Overall, KKT-hPINN is a straightforward approach that is suitable for high-fidelity surrogate modeling and maximizes the use of evident conservation constraints. KKT-hPINN preserves the end-to-end benefits of neural network and potentially mitigates the error propagation caused by soft constraints, which renders it a prospective technique in the realm of process optimization.

%% The Appendices part is started with the command \appendix;
%% appendix sections are then done as normal sections
%% \appendix

%% \section{}
%% \label{}

%% If you have bibdatabase file and want bibtex to generate the
%% bibitems, please use
%%
%\bibliographystyle{apalike}
\bibliographystyle{elsarticle-harv} 
\bibliography{main}

%% else use the following coding to input the bibitems directly in the
%% TeX file.

% \begin{thebibliography}{00}

% %% \bibitem[Author(year)]{label}
% %% Text of bibliographic item

% \bibitem[ ()]{}

% \end{thebibliography}
\end{document}